\documentclass{article}

\usepackage{microtype}
\usepackage{graphicx}
\usepackage{subcaption}
\usepackage{booktabs} 
\usepackage{tikz}
\usetikzlibrary{arrows.meta}
\usepackage{algpseudocode}
\usepackage{xcolor} 

\usepackage{hyperref}

\usepackage{algorithm}
\usepackage{algorithmic}
\usepackage{xcolor}

\usepackage[preprint]{icml2026}

\usepackage{amsmath}
\usepackage{amssymb}
\usepackage{mathtools}
\usepackage{amsthm}

\usepackage[capitalize,noabbrev]{cleveref}

\theoremstyle{plain}
\newtheorem{theorem}{Theorem}[section]

\theoremstyle{definition}

\newtheorem{assumption}[theorem]{Assumption}
\theoremstyle{remark}

\usepackage[textsize=tiny]{todonotes}

\icmltitlerunning{Dependency-Guided Parallel Decoding in Discrete Diffusion Language Models}

\begin{document}

\onecolumn
  \icmltitle{Dependency-Guided Parallel Decoding in Discrete Diffusion Language Models}


  \icmlsetsymbol{equal}{*}

  \begin{icmlauthorlist}
    \icmlauthor{Liran Ringel}{yyy}
    \icmlauthor{Ameen Ali}{xxx}
    \icmlauthor{Yaniv Romano}{yyy,zzz}
  \end{icmlauthorlist}

  \icmlaffiliation{yyy}{Department of Computer Science, Technion – Israel Institute of Technology}
  \icmlaffiliation{xxx}{Blavatnik School of Computer Science and AI, Tel Aviv, Israel}
  \icmlaffiliation{zzz}{Department of Electrical and Computer Engineering, Technion – Israel Institute of Technology}
  \icmlcorrespondingauthor{Liran Ringel}{liranringel@cs.technion.ac.il}

  \icmlkeywords{Diffusion language models, parallel sampling, non-autoregressive generation, text generation}

  \vskip 0.3in



\printAffiliationsAndNotice{}  

\begin{abstract}
Discrete diffusion language models (dLLMs) accelerate text generation by unmasking multiple tokens in parallel. However, parallel decoding introduces a distributional mismatch: it approximates the joint conditional using a fully factorized product of per-token marginals, which degrades output quality when selected tokens are strongly dependent.

We propose DEMASK (DEpendency-guided unMASKing), a lightweight dependency predictor that attaches to the final hidden states of a dLLM. In a single forward pass, it estimates pairwise conditional influences between masked positions. Using these predictions, a greedy selection algorithm identifies positions with bounded cumulative dependency for simultaneous unmasking. Under a sub-additivity assumption, we prove this bounds the total variation distance between our parallel sampling and the model's joint. Empirically, DEMASK achieves 1.7--2.2$\times$ speedup on Dream-7B while matching or improving accuracy compared to confidence-based and KL-based baselines.

Code: \href{https://github.com/liranringel/demask}{GitHub} \,\textbar\, Model: \href{https://huggingface.co/liranringel/Dream-v0-Instruct-7B-Demask}{Hugging Face}
\end{abstract}

\section{Introduction}
\label{sec:intro}

\begin{figure}[h]
    \centering
    \includegraphics[width=0.6\columnwidth]{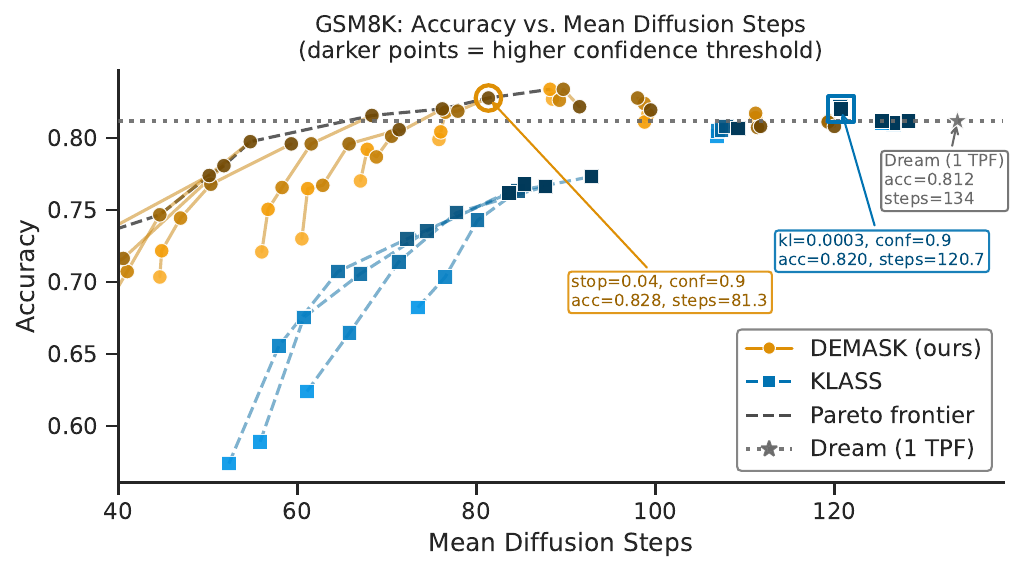}
    \caption{Accuracy vs.\ mean diffusion steps (forward passes) on GSM8K for DEMASK and KLASS. Each point represents a hyperparameter configuration; darker points indicate higher confidence thresholds. The Pareto frontier (dashed) shows DEMASK dominates across the efficiency-accuracy trade-off. Dream (1 TPF) denotes 1 token per forward pass with entropy selection.}
    \label{fig:grid_search}
\end{figure}

Discrete diffusion language models (dLLMs)~\citep{shi2024simplified,zheng2025masked,ye2025dream,nie2025large} have emerged as an alternative to autoregressive generation, iteratively denoising masked sequences while leveraging bidirectional context. A key advantage is the ability to unmask multiple tokens per diffusion step, accelerating inference compared to strictly left-to-right decoding---though potentially at the cost of output quality.

Parallel decoding introduces a distributional mismatch: while the model's conditional distribution factorizes via the chain rule, parallel sampling draws from a product of per-token marginals over a selected subset $S$ of positions, i.e., $Q_\theta(Y_S) = \prod_{i \in S} P_\theta(Y_i \mid \text{context})$ instead of the joint $P_\theta(Y_S \mid \text{context})$. This discrepancy degrades output quality when selected tokens exhibit strong dependencies. Consider ``$1 + \square = \square$'': each masked position's marginal distribution independently favors common digits like ``2'' or ``3'', but sampling both in parallel risks producing inconsistent results such as ``$1 + 2 = 5$'' since neither position conditions on the other. Unmasking one position before the other allows the second sample to condition on the first, restoring consistency.

Our key insight is that if we can identify which masked positions are approximately conditionally independent, we can unmask them simultaneously without distributional mismatch. We propose to identify, at each diffusion step, a subset of masked positions whose cumulative pairwise dependency remains bounded. To this end, we introduce a lightweight dependency predictor that estimates pairwise conditional influences from the hidden states of a single forward pass (Figure~\ref{fig:predictor_arch}), enabling a greedy algorithm to select approximately independent subsets whose cumulative dependency remains below a threshold $\tau$.

We provide theoretical justification: under a sub-additivity assumption (validated empirically), our greedy selection guarantees $\text{TV}(P_\theta, Q_\theta) \leq \tau$ between the model's joint $P_\theta$ and the factorized approximation $Q_\theta$. This bound directly limits how often parallel sampling produces different outputs than sequential sampling.

Figure~\ref{fig:grid_search} previews a key empirical result: DEMASK consistently dominates the Pareto frontier of accuracy vs.\ inference speed, achieving higher accuracy at any given step budget compared to KLASS~\citep{kim2025klass}, a recent KL-based baseline. 

In sum, our contributions are: (1) A single-forward-pass predictor for pairwise dependencies, enabling dependency-bounded parallel sampling in dLLMs. (2) Theoretical bounds on distributional deviation under a sub-additivity assumption. (3) Empirical validation showing 1.7--2.2$\times$ speedup with comparable or improved accuracy over confidence-based and KL-based baselines. (4) Training and evaluation code and trained predictor weights are publicly available on \href{https://github.com/liranringel/demask}{GitHub}.

\section{Related Work}
\label{sec:related_work}

\subsection{Autoregressive Language Models}

Autoregressive language models \cite{radford2018improving} generate text by predicting the next token given all previous tokens, factorizing the joint distribution via the chain rule: $P(x) = \prod_t P(x_t \mid x_{<t})$. The Transformer architecture~\citep{vaswani2017attention} enabled scaling to billions of parameters, with GPT-3~\citep{brown2020language} demonstrating emergent few-shot capabilities. Recent autoregressive models~\citep{dubey2024llama,liu2024deepseek,zeng2024chatglm,yang2025qwen3,team2025kimi} achieve remarkable performance across reasoning, coding, and multilingual tasks. However, autoregressive decoding is inherently sequential: generating $n$ tokens requires $n$ forward passes, limiting throughput. This motivates parallel decoding paradigms such as discrete diffusion.

\subsection{Discrete Diffusion Language Models}

Masked diffusion models (MDMs), often called diffusion large language models (dLLMs),~\citep{austin2021structured,lou2024discrete,shi2024simplified,sahoo2024simple} generate text by iteratively predicting masked tokens. The model uses bidirectional attention (no causal mask) and is trained to maximize a lower bound on the data log-likelihood (see \Cref{sec:preliminaries}).

Recent work has scaled MDMs to billions of parameters. LLaDA~\citep{nie2025large} trains an 8B-parameter model from scratch, achieving performance competitive with LLaMA3. Dream~\citep{ye2025dream} initializes from a pretrained autoregressive model and uses context-adaptive noise scheduling to reach 7B parameters. LLaDA 2.0~\citep{bie2025llada2} extends this conversion approach to 100B parameters. These large-scale models expose the marginal-joint gap that motivates our approach.

\subsection{Parallel Decoding Strategies}

Most existing methods only implicitly mitigate the marginal-joint gap discussed in \Cref{sec:intro} through confidence-based selection. Dream and LLaDA select tokens based on entropy or top-$k$ probability. LLaDA remasks a scheduled fraction of low-confidence predictions at each step. KLASS~\citep{kim2025klass} uses KL divergence between consecutive timesteps, unmasking tokens whose predicted distributions remain stable. dParallel~\citep{chen2025dparallel} uses certainty-forcing distillation to reduce the number of denoising steps. \citet{patel2025improved} propose position contrastive guidance, biasing toward left-to-right decoding. In contrast with our approach, these methods treat tokens independently and do not explicitly model inter-token dependencies.

\subsection{Dependency-Aware Decoding}

Recent work explicitly addresses the marginal-joint gap by modeling token dependencies. PUNT~\citep{azangulov2025parallel} employs a divide-and-conquer strategy to identify an approximately independent subset using $\mathcal{O}(\log|M|)$ forward passes per step, where $|M|$ is the number of masked tokens. Discrete Copula Diffusion~\citep{liu2025discrete} captures dependencies via a separate autoregressive model. APD~\citep{israel2025accelerating} combines dLLM marginals with joint probabilities from a small auxiliary autoregressive model via a multiplicative mixture. \citet{bansal2025enabling} train a lightweight sampler layer to approximate joint sampling from a frozen dLLM. \citet{jazbec2025learning} learn unmasking policies via reinforcement learning, though without explicitly modeling token dependencies.

Unlike these approaches, our method selects the least dependent token positions by predicting their dependencies in a single forward pass without requiring auxiliary models. Additionally, our selection rule has solid theoretical foundations.

\section{Preliminaries}
\label{sec:preliminaries}

\paragraph{Discrete Diffusion Language Models.}
Discrete diffusion language models generate text through iterative denoising of masked sequences~\citep{nie2025large,ye2025dream}. Let $Y = (Y_1, \ldots, Y_N) \in \mathcal{V}^N$ denote a response sequence over vocabulary $\mathcal{V}$, conditioned on a prompt $X$. During training, time $t \in [0,1]$ indexes the corruption level: at each $t$, positions are independently replaced with a special \texttt{[MASK]} token with probability $\alpha_t$, where $\alpha_0 = 0$ (clean) and $\alpha_1 = 1$ (fully masked). The model is trained to maximize a lower bound on the data log-likelihood; in practice, this reduces to minimizing a time-weighted cross-entropy over masked positions~\citep{shi2024simplified,sahoo2024simple,ye2025dream,bie2025llada2}:
\begin{equation}\label{eq:dllm_loss}
    \mathcal{L}(\theta) = \mathbb{E}\left[ w(t) \sum_{i \in M_t} -\log P_\theta(Y_i \mid X, Y_{U_t}) \right]
\end{equation}
where $M_t$ denotes the masked positions at time $t$, $Y_{U_t}$ the visible tokens, and $w(t)$ a time-dependent weight. At inference time, generation begins from a fully masked sequence and proceeds through iterative refinement: each step involves a forward pass to obtain token distributions, selection of positions to unmask, and sampling new tokens at those positions. The selection strategy---which positions to unmask per step---governs the trade-off between parallelism and output quality.

\paragraph{Total Variation Distance.} We define here this distance as it will be used later to bound the approximation error between the factorized and joint distributions.
For distributions $P$ and $Q$ over a discrete sample space $\mathcal{X}$, the total variation (TV) distance is defined as
\begin{equation}
    \textup{TV}(P, Q) = \frac{1}{2} \sum_{x \in \mathcal{X}} |P(x) - Q(x)|.
\end{equation}
TV is a metric on probability distributions, bounded in $[0,1]$, and satisfies the triangle inequality.

\section{Methodology}
\label{sec:method}

\begin{figure*}[t]
    \centering
    \includegraphics[width=\textwidth]{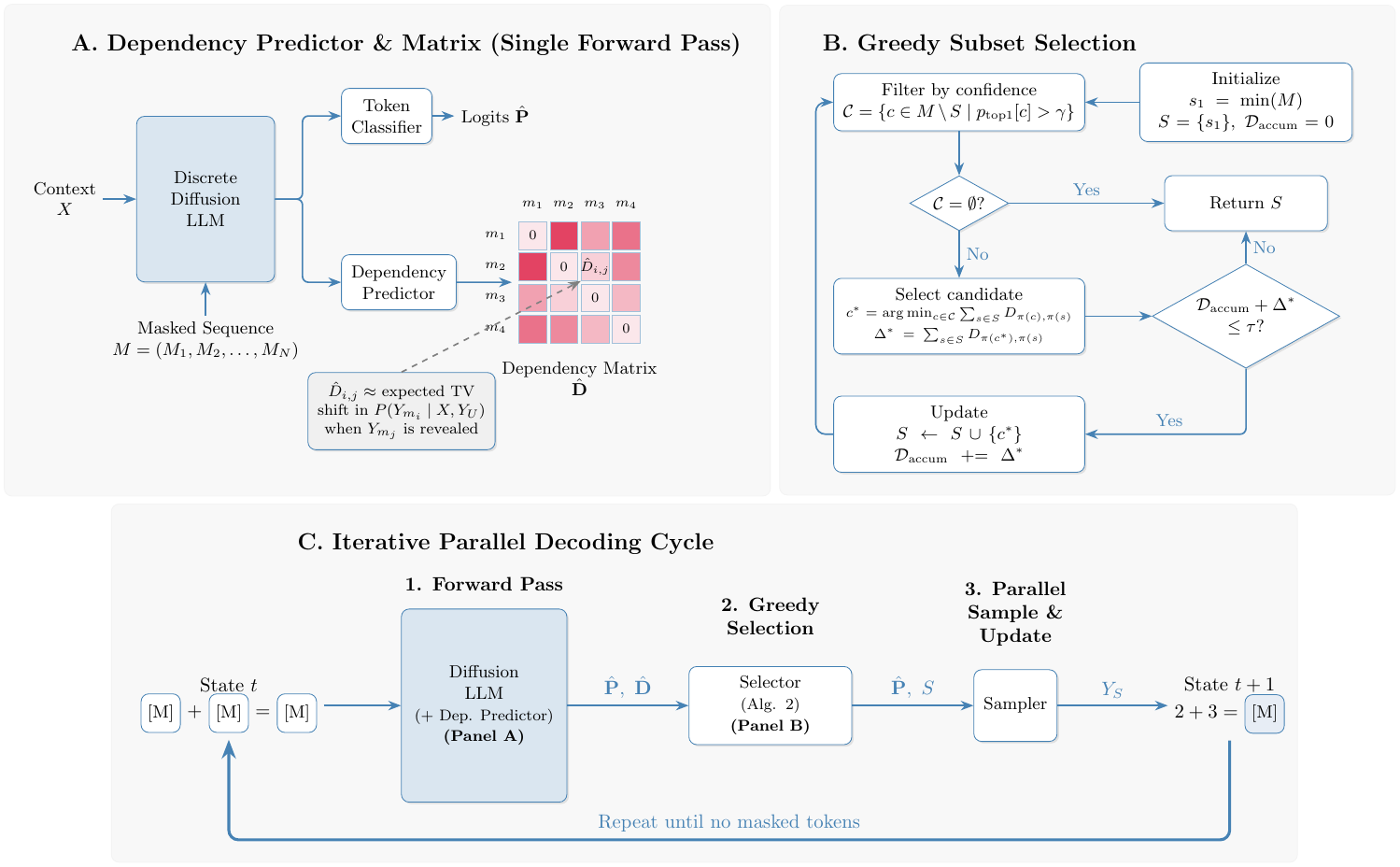}
    \caption{Overview of DEMASK. \textbf{(A)}~A lightweight dependency predictor attaches to the dLLM backbone and estimates pairwise dependencies $\hat{\mathbf{D}}$ from hidden states in a single forward pass. \textbf{(B)}~Greedy subset selection identifies positions with bounded cumulative dependency for parallel unmasking. \textbf{(C)}~The iterative decoding cycle: each step performs a forward pass, selects positions, and samples them in parallel until all tokens are unmasked.}
    \label{fig:diagram}
\end{figure*}

To accelerate generation, we propose an iterative unmasking procedure (Algorithm~\ref{alg:generation}) where each diffusion step identifies a subset of masked positions $S=\{s_1, \ldots, s_{|S|}\}$ to unmask simultaneously. This subset is constructed using a greedy selection strategy (Algorithm~\ref{alg:selection}), selecting positions one by one such that their values can be sampled with minimal deviation from the model's joint distribution. Crucially, $S$ is constructed using the output of a single forward pass. Figure~\ref{fig:diagram} illustrates our approach.

We first describe the greedy selection procedure (\S\ref{sec:greedy}), then provide theoretical guarantees (\S\ref{sec:theory}), and finally detail the dependency predictor that enables efficient selection (\S\ref{sec:predictor}).

\subsection{Greedy Position Selection}
\label{sec:greedy}

Let $X$ denote the tokens of the input prompt. We partition the indices of the current response sequence into two sets: $U$ (unmasked) and $M$ (masked). Let $Y_U = (Y_{u_1}, \ldots, Y_{u_{|U|}}) \in \mathcal{V}^{|U|}$ denote the tokens at positions $U$ which were determined in previous diffusion steps, where $\mathcal{V}$ is the vocabulary. Analogously, let $Y_S = (Y_{s_1}, \ldots, Y_{s_{|S|}}) \in \mathcal{V}^{|S|}$ denote the random variables corresponding to the tokens at the selected positions $S \subseteq M$, and let $Y_M \in \mathcal{V}^{|M|}$ denote the variables corresponding to the set of masked positions $M=\{m_1, \ldots, m_{|M|}\}$. Our goal is to select the largest subset $S$ such that, conditioned on $X$ and $Y_U$, the dependence among the variables in $Y_S$ remains below a specified threshold.

In diffusion language models, parallel sampling of multiple tokens amounts to sampling from their marginal distributions. While the model's joint distribution follows the chain rule decomposition $P_\theta(Y_S \mid X, Y_U) = \prod_{j=1}^{|S|} P_\theta(Y_{s_j} \mid Y_{s_{<j}}, X, Y_U)$, where $s_{<j} = \{s_1, \dots, s_{j-1}\}$, parallel sampling effectively draws from a fully factorized approximation $Q_\theta(Y_S \mid X, Y_U)$, defined as:
\begin{equation}\label{eq:Q}
    Q_\theta(Y_S \mid X, Y_U) = \prod_{i \in S} P_\theta(Y_i \mid X, Y_U)
\end{equation}
where each token is sampled independently given the context.

We construct the set $S$ using a greedy approximation strategy, described in Algorithm~\ref{alg:selection}. Let $M = \{m_1, \dots, m_{|M|}\}$ be the ordered set of masked positions. Let $\mathbf{D} \in [0,1]^{|M| \times |M|}$ be the pairwise dependency matrix, where the entry $D_{i,j}$ represents the expected change in the conditional distribution of the variable at position $m_i$ when the variable at position $m_j$ is revealed:
\begin{equation}\label{eq:D_mat}
\begin{split}
    D_{i,j} = \mathbb{E}_{\textcolor{blue}{Y_{m_j}} \sim P_\theta(\cdot \mid X, Y_U)} \Big[ \textup{TV}\Big( & P_\theta(Y_{m_i} \mid X, Y_U), \\
    & P_\theta(Y_{m_i} \mid X, Y_U, \textcolor{blue}{Y_{m_j}})\Big) \Big].
\end{split}
\end{equation}
Here, $\textup{TV}(P, Q) = \frac{1}{2} \sum_{y \in \mathcal{V}} |P(y) - Q(y)|$. Intuitively, $D_{i,j} \approx 0$ implies that $Y_{m_i}$ is nearly conditionally independent of $Y_{m_j}$ given the prompt $X$ and history $Y_U$. Note that the matrix $\mathbf{D}$ is generally asymmetric. At inference time, we use $\hat{\mathbf{D}}$, the output of a learned dependency predictor trained to approximate these values. The details on how we learn to predict $\hat{\mathbf{D}}$ are provided later in Section~\ref{sec:predictor}; first, we explain how the knowledge of the pairwise dependencies $\mathbf{D}$ can be used to select $S$.

The algorithm initializes the set $S$ by selecting the left-most masked position $s_1 = \min(M)$. This left-to-right bias reflects the sequential structure of natural language reasoning: prior work has shown that biasing toward earlier positions improves generation quality~\citep{patel2025improved}, and that token-order selection outperforms confidence-based selection when generating one token per step~\citep{israel2025accelerating}. Notably, our method can also be implemented without this left-to-right bias; it is merely a design choice that we found to work well.

Subsequently, the algorithm proceeds iteratively. At each step, we first identify a set of candidate positions $\mathcal{C} \subseteq M \setminus S$ where the model's confidence—measured by the top-1 probability—exceeds a threshold $\gamma$. This filtering follows the common practice in previous work, showing that confidence-based selection improves performance \cite{bie2025llada2,kim2025klass}. In our context, this confidence-based selection is necessary because approximate independence does not imply low uncertainty: tokens may be approximately independent yet individually uncertain when their values depend on context not yet revealed. Without confidence filtering, the algorithm may commit to highly uncertain tokens early.

From these high-confidence candidates $\mathcal{C}$, we select the position $c^*$ that minimizes the aggregate pairwise dependency on the currently selected positions $S$:
\begin{equation}
    c^* = \underset{c \in \mathcal{C}}{\arg\min} \sum_{s \in S} D_{\pi(c), \pi(s)},
\end{equation}
where $\pi: M \to \{1, \dots, |M|\}$ maps a global position in $M$ to its corresponding index in the matrix $\mathbf{D}$: $\pi(m_i) = i$. The algorithm continues to add positions to $S$ until either no candidates remain (i.e., $\mathcal{C} = \emptyset$) or the cumulative dependency exceeds $\tau$. Formally, if $\mathcal{D}_{\text{accum}}$ is the current accumulated dependency, we stop if $\mathcal{D}_{\text{accum}} + \sum_{s \in S} D_{\pi(c^*), \pi(s)} > \tau$. This strategy maximizes the degree of parallelism for accelerated generation while strictly limiting the deviation from the model's joint distribution to a user-specified bound, as formally proven in Section~\ref{sec:theory}.

\begin{algorithm}[t]
\caption{Iterative Diffusion Generation}
\label{alg:generation}
\begin{algorithmic}[1]
\Require Prompt $X$, Sequence Length $N$, Thresholds $\tau, \gamma$
\State $U \gets \emptyset, \quad M \gets \{1, \dots, N\}$

\While{$M \neq \emptyset$}
    \vspace{0.2em}
    \State \textbf{\textit{Phase 1: Forward Pass}}
        \State $\hat{\mathbf{P}}, \hat{\mathbf{D}} \gets \text{Model}(X, Y_U, M)$
        \Statex \hspace{\algorithmicindent} {\small \color{gray} $\triangleright$ Dims: $\hat{\mathbf{P}} \in [0,1]^{|M| \times |\mathcal{V}|}, \enspace \hat{\mathbf{D}} \in [0,1]^{|M| \times |M|}$}
        
        \State $p_{\text{top1}} \gets \max(\hat{\mathbf{P}}, \text{dim}=\text{vocab})$
        \Statex \hspace{\algorithmicindent} {\small \color{gray} $\triangleright$ Dims: $p_{\text{top1}} \in [0,1]^{|M|}$}

    \Statex
    \State \textbf{\textit{Phase 2: Greedy Subset Selection}} (Algorithm~\ref{alg:selection})
        \State $S \gets \text{GreedySubsetSelection}(\hat{\mathbf{D}}, M, p_{\text{top1}}, \gamma, \tau)$

    \Statex
    \State \textbf{\textit{Phase 3: Parallel Sample \& Update}}
        \State $Y_S \sim \text{Sample}(\hat{\mathbf{P}}, S)$ \hspace{0.1cm}// Sample at positions $S$
        \State $U \gets U \cup S$
        \State $M \gets M \setminus S$
        \State $Y_U \gets \text{Update}(Y_U, Y_S)$
\EndWhile
\State \Return $Y_U$
\end{algorithmic}
\end{algorithm}

\begin{algorithm}[t]
\caption{Greedy Subset Selection}
\label{alg:selection}
\begin{algorithmic}[1]
\Require Dependency matrix $\mathbf{D}$, Mask positions $M$, Top-1 probs $p_{\text{top1}}$, Confidence threshold $\gamma$, Dependency Bound $\tau$
\State $S \gets \emptyset, \quad \mathcal{D}_{\text{accum}} \gets 0$
\State $s_1 \gets \min(M)$ \hspace{0.1cm}// Start with left-most
\State $S \gets S \cup \{s_1\}$

\While{\textbf{true}}
    \State \textbf{\textit{Step 1: Filter Candidates by Confidence}}
    \State $\mathcal{C} \gets \{c \in M \setminus S \mid p_{\text{top1}}[c] > \gamma\}$
    \If{$\mathcal{C} = \emptyset$} \textbf{break} \EndIf

    \Statex
    \State \textbf{\textit{Step 2: Select Candidate Minimizing Dependency}}
    \State $c^* \gets \arg\min_{c \in \mathcal{C}} \sum_{s \in S} D_{\pi(c), \pi(s)}$
    \State $\Delta^* \gets \sum_{s \in S} D_{\pi(c^*), \pi(s)}$

    \Statex
    \State \textbf{\textit{Step 3: Check Cumulative Dependency Constraint}}
    \If{$\mathcal{D}_{\text{accum}} + \Delta^* > \tau$} \textbf{break} \EndIf
    \State $S \gets S \cup \{c^*\}$
    \State $\mathcal{D}_{\text{accum}} \gets \mathcal{D}_{\text{accum}} + \Delta^*$
\EndWhile
\State \Return $S$
\end{algorithmic}
\end{algorithm}

\subsection{Theoretical Analysis}
\label{sec:theory}

We now provide a theoretical grounding for the proposed greedy selection strategy, assuming that the true pairwise dependency matrix $\mathbf{D}$ is known. Our objective is to ensure that the joint distribution of the variables at the selected positions $S$ does not deviate significantly from the fully factorized approximation.

We measure the approximation error between the model's joint distribution $P_\theta(Y_S \mid X, Y_U)$ and the factorized approximation $Q_\theta(Y_S \mid X, Y_U)$ using the TV distance:
\begin{equation}
    \mathcal{E}(S) = \textup{TV}\Big(P_\theta(Y_S \mid X, Y_U), Q_\theta(Y_S \mid X, Y_U)\Big).
\end{equation}
This metric admits a rigorous operational interpretation via the concept of maximal coupling. A coupling is any joint distribution over pairs $(A,B)$ whose marginals are $P_A$ and $P_B$, respectively. It is a standard result that the minimum mismatch probability $\mathbb{P}(A \neq B)$ among all couplings is exactly $\textup{TV}(P_A, P_B)$, and this minimum is attained by the maximal coupling \citep{levin2017markov}. In our case, let $A \sim P_\theta(Y_S \mid X, Y_U)$ follow the model's joint and $B \sim Q_\theta(Y_S \mid X, Y_U)$ follow the factorized distribution. Then, the minimum mismatch probability is $\mathbb{P}(A \neq B) = \mathcal{E}(S)$. Consequently, if $\mathcal{E}(S) \leq \tau$, then under the optimal coupling $(A,B)$, we have $\mathbb{P}(A = B) \geq 1 - \tau$ \citep[Proposition 4.7]{levin2017markov}.

To bound this error using only pairwise statistics, we rely on the following assumption regarding the sub-additivity of dependencies.

\begin{assumption}[Sub-Additivity]
\label{ass:subadditivity}

We assume the dependency of a variable on its history is sub-additive:
\begin{align}
  \begin{split}
      & \mathbb{E}_{Y_{s_{<t}}} \left[ \textup{TV}\Big(P_\theta(Y_{s_t} \mid X, Y_U), P_\theta(Y_{s_t} \mid X, Y_U, Y_{s_{<t}})\Big) \right] \\
      & \quad \le \sum_{j=1}^{t-1} D_{\pi(s_t), \pi(s_j)},
  \end{split}
\end{align}
where $Y_{s_{<t}} = (Y_{s_1}, \dots, Y_{s_{t-1}})$ denotes the random variables at previously selected positions.
\end{assumption}

That is, we assume that the joint influence of the history $Y_{s_{<t}}$ on $Y_{s_t}$ does not exceed the sum of individual pairwise influences $D_{\pi(s_t), \pi(s_j)}$ from \eqref{eq:D_mat}. In Section~\ref{sec:synergy_validation}, we provide empirical evidence on the Tulu 3 SFT Mixture dataset demonstrating that this assumption holds in practice. With this assumption in place, we prove that the deviation between the joint and factorized distributions is bounded by $\tau$ for the selected set obtained by our greedy algorithm. See Appendix~\ref{sec:proof_thm_correctness} for the complete proof.

\begin{theorem}[Correctness of Algorithm~\ref{alg:selection}] \label{thm:correctness}
Suppose that Assumption~\ref{ass:subadditivity} holds and that Algorithm~\ref{alg:selection} is implemented with the true matrix $\mathbf{D}$ and error threshold $\tau$. The selected subset $S$ returned by Algorithm~\ref{alg:selection} satisfies $\textup{TV}(P_\theta(Y_S \mid X, Y_U), Q_\theta(Y_S \mid X, Y_U)) \le \tau$.
\end{theorem}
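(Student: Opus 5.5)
The plan is a hybrid (telescoping) argument along the order in which Algorithm~\ref{alg:selection} adds positions. Let $S=\{s_1,\dots,s_k\}$ be listed in insertion order, and abbreviate the context $(X,Y_U)$ by $c$. For $t=0,\dots,k$ define the hybrid distribution
\begin{equation*}
H_t(y_S) = \prod_{j\le t} P_\theta(y_{s_j}\mid c, y_{s_{<j}}) \prod_{j>t} P_\theta(y_{s_j}\mid c).
\end{equation*}
Here $H_0 = Q_\theta(\cdot\mid X,Y_U)$ and, by the chain rule, $H_k = P_\theta(\cdot \mid X,Y_U)$. The triangle inequality for TV then gives $\mathcal{E}(S) \le \sum_{t=1}^{k}\textup{TV}(H_{t-1},H_t)$.

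Next I bound each term. $H_{t-1}$ and $H_t$ share the factors for $j<t$ and for $j>t$; they differ only in the factor at $s_t$. Write the TV as half an $\ell_1$ sum and sum out the trailing coordinates $j>t$, whose factors are identical probability distributions and so sum to one. Then factor out the common prefix law $P_\theta(y_{s_{<t}}\mid c)$. This yields the exact identity
\begin{equation*}
\textup{TV}(H_{t-1},H_t) = \mathbb{E}_{Y_{s_{<t}}\sim P_\theta(\cdot\mid c)}\Big[\textup{TV}\big(P_\theta(Y_{s_t}\mid c),\,P_\theta(Y_{s_t}\mid c,Y_{s_{<t}})\big)\Big].
\end{equation*}
The prefix is distributed according to the model's joint, which is exactly the expectation appearing in Assumption~\ref{ass:subadditivity}. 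Applying the assumption bounds this term by $\sum_{j<t} D_{\pi(s_t),\pi(s_j)}$. For $t=1$ both sides are zero.

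Finally, I relate these sums to the algorithm's bookkeeping. When $s_t$ (for $t\ge2$) is added as $c^*$, the quantity $\Delta^*$ equals exactly $\sum_{j<t}D_{\pi(s_t),\pi(s_j)}$, computed against the then-current set $S=\{s_1,\dots,s_{t-1}\}$. An addition happens only if $\mathcal{D}_{\text{accum}}+\Delta^*\le\tau$, and the invariant $\mathcal{D}_{\text{accum}}=\sum_{t'\le t}\Delta^*_{t'}$ is maintained. So at termination $\sum_{t=2}^k \sum_{j<t} D_{\pi(s_t),\pi(s_j)} = \mathcal{D}_{\text{accum}}\le\tau$, and chaining the inequalities gives the theorem. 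The degenerate case $|S|=1$ is trivial, since $P=Q$.

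The main obstacle is the middle step: the marginalization identity must be done with care. First, the trailing factors must be conditional-free marginals so that they integrate out. Second, the expectation must come out under the joint law of the prefix, not the factorized one; otherwise the assumption would not apply as stated. I would write out the $\ell_1$ sum explicitly, pull $P_\theta(y_{s_{<t}}\mid c)\ge0$ outside the absolute value, and sum the tail factors to one.
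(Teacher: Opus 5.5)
Your proposal is correct and follows essentially the same route as the paper's proof. The paper uses the same hybrid distributions (your $H_t$ coincides with its $Q^{(t)}$, and your extra $H_0$ equals $H_1$, so nothing changes). It likewise sums out the trailing factors to get the expected-TV identity under the joint law of the prefix, applies Assumption~\ref{ass:subadditivity}, and identifies each summand with the algorithm's $\Delta^*$, so the total is $\mathcal{D}_{\text{accum}} \le \tau$.
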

\noindent \textbf{Remark.} When using an estimated $\hat{\mathbf{D}}$, the bound holds approximately, with the additional error depending on the quality of the predictor.

\subsection{Learning the Dependency Predictor}
\label{sec:predictor}

The greedy selection strategy described above requires access to the pairwise dependency matrix $\mathbf{D}$ defined in~\eqref{eq:D_mat}. Computing this matrix exactly requires $O(|M|)$ additional forward passes---one for each masked position---which would negate the efficiency gains of parallel decoding. We address this by training a lightweight predictor that estimates $\hat{\mathbf{D}}$ from the hidden states of a single forward pass.

\paragraph{Architecture.}
Our predictor draws inspiration from scaled dot-product attention~\citep{vaswani2017attention}. Let $d$ denote the hidden dimension of the backbone. Given final-layer hidden states $\mathbf{H} \in \mathbb{R}^{|M| \times d}$ at the masked positions (batch dimension omitted for clarity), we compute query and key representations $\mathbf{Q} = \mathbf{H} \mathbf{W}_Q$ and $\mathbf{K} = \mathbf{H} \mathbf{W}_K$, where $\mathbf{W}_Q, \mathbf{W}_K \in \mathbb{R}^{d \times d}$ are learned projections. The predicted dependency is:
\begin{equation}
    \hat{D}_{i,j} = \sigma\left( \frac{\mathbf{q}_i \cdot \mathbf{k}_j}{\sqrt{d}} \right)
\end{equation}
where $\mathbf{q}_i, \mathbf{k}_j$ are the $i$-th and $j$-th rows of $\mathbf{Q}, \mathbf{K}$ respectively, and $\sigma(\cdot)$ is sigmoid. The sigmoid ensures outputs lie in $[0, 1]$, matching the TV range. We zero out the diagonal since self-dependencies are irrelevant. Note that $\hat{\mathbf{D}}$ is generally asymmetric ($\hat{D}_{i,j} \neq \hat{D}_{j,i}$), consistent with~\eqref{eq:D_mat}. We observed empirically that this Q/K factorization yields better optimization performance. Importantly, at inference time, we merge the projections into $\mathbf{W} = \mathbf{W}_Q \mathbf{W}_K^\top$ and compute $\hat{D}_{i,j} = \sigma(\mathbf{h}_i \mathbf{W} \mathbf{h}_j^\top / \sqrt{d})$ directly to reduce the two matrix-matrix multiplications (required to compute $\mathbf{Q}$ and $\mathbf{K}$) into one. Figure~\ref{fig:predictor_arch} illustrates this architecture.

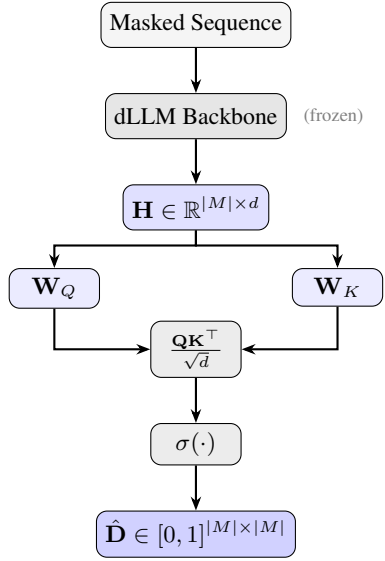
\begin{figure}[t]
\centering
\begin{tikzpicture}[
    node distance=0.6cm and 0.8cm,
    box/.style={rectangle, draw, rounded corners, minimum width=1.8cm, minimum height=0.6cm, align=center, font=\small},
    smallbox/.style={rectangle, draw, rounded corners, minimum width=1.2cm, minimum height=0.5cm, align=center, font=\footnotesize},
    arrow/.style={-{Stealth[scale=0.8]}, thick},
    annot/.style={font=\scriptsize, text=gray}
]

\node[box, fill=gray!8] (input) {Masked Sequence};

\node[box, fill=gray!20, below=of input] (backbone) {dLLM Backbone};
\node[annot, right=0.1cm of backbone] {(frozen)};

\node[box, fill=blue!12, below=of backbone] (hidden) {$\mathbf{H} \in \mathbb{R}^{|M| \times d}$};

\node[smallbox, fill=blue!8, below left=0.5cm and 0.3cm of hidden] (wq) {$\mathbf{W}_Q$};
\node[smallbox, fill=blue!8, below right=0.5cm and 0.3cm of hidden] (wk) {$\mathbf{W}_K$};

\node[smallbox, fill=gray!15, below=1.2cm of hidden] (dot) {$\frac{\mathbf{Q}\mathbf{K}^\top}{\sqrt{d}}$};

\node[smallbox, fill=gray!15, below=of dot] (sigmoid) {$\sigma(\cdot)$};

\node[box, fill=blue!18, below=of sigmoid] (output) {$\hat{\mathbf{D}} \in [0,1]^{|M| \times |M|}$};

\draw[arrow] (input) -- (backbone);
\draw[arrow] (backbone) -- (hidden);
\draw[arrow] (hidden.south) -- ++(0,-0.2) -| (wq.north);
\draw[arrow] (hidden.south) -- ++(0,-0.2) -| (wk.north);
\draw[arrow] (wq.south) |- (dot.west);
\draw[arrow] (wk.south) |- (dot.east);
\draw[arrow] (dot) -- (sigmoid);
\draw[arrow] (sigmoid) -- (output);

\end{tikzpicture}
\caption{Dependency predictor architecture. Hidden states $\mathbf{H}$ from the frozen backbone are projected via learned $\mathbf{W}_Q, \mathbf{W}_K$, then combined via scaled dot-product and sigmoid to predict the pairwise dependency matrix $\hat{\mathbf{D}}$.}
\label{fig:predictor_arch}
\end{figure}

\paragraph{Two-Phase Training Pipeline.}
Training the dependency predictor proceeds in two phases: (i)~\emph{TV cache generation}, where we precompute ground-truth dependency matrices using the full backbone, and (ii)~\emph{predictor training}, where we train only the lightweight predictor on the cached targets. This decoupling offers key advantages: we can run multiple epochs over the cached data without recomputing expensive TV matrices; and we can iterate on predictor architectures and hyperparameters without re-running the costly data generation.

\paragraph{Phase 1: TV Cache Generation.}
We generate training data by explicitly computing pairwise TV distances. For each training sample, we apply random masking with ratio $t \sim \text{Uniform}(0, 1)$. Recall that~\eqref{eq:D_mat} defines $D_{i,j}$ as an expectation over $Y_{m_j}$. We define $D_{i,j}(y_{m_j})$ as the TV distance for a specific realization $y_{m_j} \sim P_\theta(\cdot \mid X, Y_U)$: 
\begin{equation*}\label{eq:D_sample}
  D_{i,j}(y_{m_j}) = \textup{TV}\Big(P_\theta(Y_{m_i} \mid X, Y_U), P_\theta(Y_{m_i} \mid X, Y_U, y_{m_j})\Big)
\end{equation*}
so that $D_{i,j} = \mathbb{E}_{Y_{m_j}\sim P_\theta(\cdot \mid X, Y_U)}[D_{i,j}(Y_{m_j})]$. For each masked position $m_j$, we sample a realization $y_{m_j}$ and store $D_{i,j}(y_{m_j})$ for all $i$. This requires $|M|+1$ forward passes per sample: one with all positions masked to obtain $P_\theta(Y_{m_i} \mid X, Y_U)$, and one for each revealed token $y_{m_j}$ to obtain $P_\theta(Y_{m_i} \mid X, Y_U, y_{m_j})$. To increase diversity, we generate 5 samples per response using different random masks and $y_{m_j}$ draws. Our training set comprises 100K responses from the Tulu-3 SFT mixture~\citep{lambert2025tulu}, yielding approximately 500K training examples. Generating this cache requires 3.5 hours on 8$\times$H200 GPUs.

\paragraph{Phase 2: Predictor Training.}
With the TV cache in hand, we train only the dependency predictor while keeping the backbone frozen. Our objective is to minimize the expected squared error:
\begin{equation}\label{eq:objective}
\mathbb{E}\left[ \left( \hat{D}_{i,j} - D_{i,j}(Y_{m_j}) \right)^2 \right],
\end{equation}
where the expectation is over the training distribution $(X, Y_U, M)$, uniform sampling of pairs $(i,j)$ with $i \neq j$, and $Y_{m_j} \sim P_\theta(\cdot \mid X, Y_U)$. We minimize an unbiased estimate of this objective by averaging over all off-diagonal pairs in a batch:
\begin{equation}\label{eq:loss}
    \mathcal{L} = \frac{1}{\sum_{b=1}^{B} (|M^{(b)}|^2 - |M^{(b)}|)} \sum_{b=1}^{B} \sum_{i \neq j} \left( \hat{D}_{i,j}^{(b)} - D_{i,j}^{(b)}(y_{m_j}^{(b)}) \right)^2
\end{equation}
where $B$ is the batch size, and $y_{m_j}^{(b)}$ are $B$ realizations of $Y_{m_j}$ from Phase 1. Diagonal entries are excluded since self-dependencies are irrelevant. Crucially, the MSE-optimal predictor outputs the conditional mean, so the trained model learns to approximate $\mathbb{E}_{Y_{m_j} \sim P_\theta(\cdot \mid X,Y_U)}[D_{i,j}(Y_{m_j})] = D_{i,j}$, in line with~\eqref{eq:D_mat}. This is because $y_{m_j}$ in \eqref{eq:loss} are sampled from $P_\theta(\cdot \mid X,Y_U)$. The backbone remains frozen in bfloat16, while the predictor is trained in float32 for numerical stability. We use AdamW with learning rate $10^{-5}$, weight decay $0.01$, and cosine schedule with 5\% warmup. We train for 5 epochs (approx.\ 95 minutes on 8$\times$H200 GPUs) and select the checkpoint with the lowest validation loss. The predictor adds only $2d^2 \approx 26$M parameters (for $d = 3584$), negligible compared to the 7B backbone.

\section{Experiments}

\begin{table*}[t]
\centering
\caption{Comparison of DEMASK and KLASS on Dream-7B. Each cell shows accuracy (\%) with speedup in parentheses, relative to Entropy (1 tok). \textbf{Bold}: best average accuracy (Avg excludes GSM8K).}
\label{tab:dream-klass}
\small
\setlength{\tabcolsep}{4pt}
\begin{tabular}{@{}lccccc@{}}
\toprule
Method & MMLU-Pro & GSM8K & HumanEval & MBPP & Avg \\
\midrule
Entropy (1 tok) & 43.9$\pm$0.4 (1.0$\times$) & 81.2$\pm$1.1 (1.0$\times$) & 53.7$\pm$3.9 (1.0$\times$) & 58.0$\pm$2.2 (1.0$\times$) & 51.8 (1.0$\times$) \\
Entropy (2 tok) & 39.7$\pm$0.4 (2.0$\times$) & 76.7$\pm$1.2 (2.2$\times$) & 31.1$\pm$3.6 (1.1$\times$) & 43.8$\pm$2.2 (1.3$\times$) & 38.2 (1.5$\times$) \\
Top-1 (1 tok) & 45.5$\pm$0.4 (1.0$\times$) & 81.2$\pm$1.1 (1.0$\times$) & 53.7$\pm$3.9 (1.0$\times$) & 58.0$\pm$2.2 (1.0$\times$) & 52.4 (1.0$\times$) \\
Top-1 (2 tok) & 43.5$\pm$0.4 (2.0$\times$) & 76.7$\pm$1.2 (2.1$\times$) & 31.1$\pm$3.6 (1.1$\times$) & 43.8$\pm$2.2 (1.4$\times$) & 39.5 (1.5$\times$) \\
Token Order (1 tok) & 45.7$\pm$0.4 (1.3$\times$) & 80.7$\pm$1.1 (1.1$\times$) & 55.5$\pm$3.9 (1.3$\times$) & 57.8$\pm$2.2 (1.5$\times$) & 53.0 (1.3$\times$) \\
Token Order (2 tok) & 43.6$\pm$0.4 (2.8$\times$) & 73.8$\pm$1.2 (2.1$\times$) & 24.4$\pm$3.4 (1.1$\times$) & 40.8$\pm$2.2 (1.5$\times$) & 36.2 (1.8$\times$) \\
KLASS & 45.7$\pm$0.4 (1.3$\times$) & 82.0$\pm$1.1 (1.1$\times$) & 55.5$\pm$3.9 (1.1$\times$) & 58.2$\pm$2.2 (1.6$\times$) & 53.1 (1.3$\times$) \\
DEMASK (ours) & 47.5$\pm$0.4 (1.7$\times$) & 82.8$\pm$1.0 (1.8$\times$) & 54.9$\pm$3.9 (1.7$\times$) & 57.6$\pm$2.2 (2.2$\times$) & \textbf{53.3 (1.9$\times$)} \\
\bottomrule
\end{tabular}
\end{table*}

\subsection{Experimental Setup}

We evaluate DEMASK on Dream-7B~\citep{ye2025dream}, a 7B-parameter dLLM trained on instruction-following data. We compare against several baselines: (i)~\textbf{Entropy}: select tokens with lowest entropy (the baseline used in the original Dream paper); (ii)~\textbf{Top-1}: select tokens with highest top-1 probability; (iii)~\textbf{KLASS}~\citep{kim2025klass}: select stable tokens that pass both a KL divergence threshold (between consecutive timesteps) and a top-1 confidence threshold; and (iv)~\textbf{Token Order}: unmask tokens left-to-right. For (i), (ii), and (iv), we evaluate with 1 and 2 tokens per step. All methods use temperature 0.1 and top-$p$ sampling with $p=0.9$, following the original Dream evaluation setup.

We report results on four benchmarks: MMLU-Pro~\citep{wang2024mmlu} (reasoning), GSM8K~\citep{cobbe2021training} (math reasoning), HumanEval~\citep{chen2021evaluating} and MBPP~\citep{austin2021program} (code generation). We use the lm-evaluation-harness library~\cite{eval-harness} to run the evaluations. All experiments run on 8$\times$H200 GPUs with data parallelism (batch size 1 per GPU). We report wall-clock time relative to Entropy with 1 token per forward pass.

\paragraph{Avoid sampling after EOS tokens.} We apply an optimization to all methods: once an end-of-sequence (EOS) token is sampled, all subsequent positions are immediately set to EOS. The model typically fills these positions with EOS tokens on its own, but burns inference time doing so. This optimization avoids that wasted computation and ensures fair comparison across methods. As a result, metrics may differ from those reported in prior work. We discuss the impact of this optimization in Appendix~\ref{app:eos_ablation}.

\subsection{Hyperparameter Selection}

DEMASK has two hyperparameters: the dependency threshold $\tau$ (lower values enforce stricter independence) and the confidence threshold $\gamma$ (higher values require more confident predictions). KLASS similarly has two hyperparameters: a KL threshold (unmask only if the predicted distribution is close to that of the previous timesteps) and a confidence threshold. In all experiments, we use a history length of 2 (see \citet{kim2025klass} for details). We conduct a grid search on GSM8K for both methods; see Appendix~\ref{app:hyperparams} for the full search grids.

Figure~\ref{fig:grid_search} shows accuracy vs.\ mean diffusion steps for DEMASK and KLASS. DEMASK dominates the Pareto frontier across the entire trade-off curve, achieving higher accuracy at comparable step counts. We select $\tau=0.04$ and $\gamma=0.9$ for DEMASK, and KL threshold $0.0003$ with a confidence threshold $0.9$ for KLASS, as favorable trade-off points. We use these hyperparameters for both methods in all subsequent experiments.
To ensure a fair comparison, we re-optimized the hyperparameters for KLASS rather than using the values reported in the original paper, accounting for differences in the temperature parameter and evaluation pipeline.

\subsection{Main Empirical Results}

Table~\ref{tab:dream-klass} presents our main empirical results. DEMASK achieves the highest average accuracy (53.3\%) while providing 1.9$\times$ average speedup over the Entropy baseline. Notably, DEMASK improves accuracy on MMLU-Pro (+3.6\% over Entropy), suggesting that dependency-aware selection can improve output quality in addition to accelerating generation. We exclude GSM8K from the average accuracy and speedup calculations, as it was used for hyperparameter optimization.

\paragraph{Baselines degrade with parallelism.} All baselines (Entropy, Top-1, and Token Order) suffer significant accuracy drops when unmasking 2 tokens per step instead of 1. This degradation validates our central hypothesis: unmasking multiple tokens without accounting for their dependencies introduces errors that compound through generation.

\subsection{Empirical Validation of Sub-Additivity}
\label{sec:synergy_validation}

\begin{figure}[h]
    \centering
    \includegraphics[width=0.5\columnwidth]{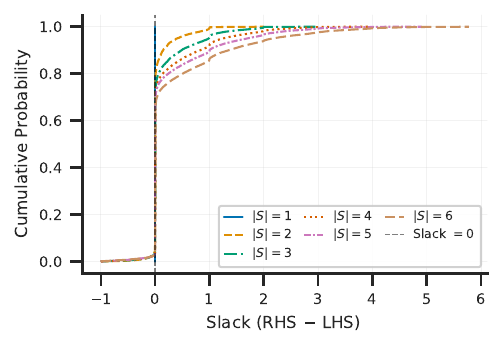}
    \caption{Empirical CDF of the slack $\mathrm{RHS}_i - \mathrm{LHS}_i$ stratified by subset size $|S|$, evaluated on Tulu 3 SFT Mixture with Dream-7B. The $|S|=1$ curve (vertical line at zero) is trivially satisfied. For $|S| \geq 2$, positive slack indicates the sub-additivity bound holds.}
    \label{fig:synergy_cdf}
\end{figure}

We empirically validate Assumption~\ref{ass:subadditivity} (Sub-Additivity) on the Tulu 3 SFT Mixture dataset~\citep{lambert2025tulu} using Dream-7B~\citep{ye2025dream}. This assumption underpins the theoretical guarantees of our greedy selection algorithm (Theorem~\ref{thm:correctness}).

\paragraph{Experimental Setup.}
We evaluate on 5,000 prompt--response pairs from the dataset, drawing 5 independent random masks per response. For each masked sample, we apply the same random masking procedure used during training: sample a mask ratio $t \sim \mathrm{Uniform}(0,1)$ and mask the corresponding fraction of response positions to obtain a masked set $M$. We then uniformly sample a subset size $|S| \sim \mathrm{Uniform}\{1,\dots,6\}$ and draw $S \subset M$ uniformly at random among subsets of that size. For each target position $i \in M \setminus S$, we sample token values $y_S$ from the model's conditional distribution and compute:
\begin{equation}\label{eq:synergy_lhs_rhs}
\begin{aligned}
    \mathrm{LHS}_i &= \mathrm{TV}\bigl(P_\theta(Y_i \mid X, Y_U),\; P_\theta(Y_i \mid X, Y_U, Y_S = y_S)\bigr), \\
    \mathrm{RHS}_i &= \sum_{j \in S} \mathrm{TV}\bigl(P_\theta(Y_i \mid X, Y_U),\; P_\theta(Y_i \mid X, Y_U, Y_j = y_j)\bigr),
\end{aligned}
\end{equation}
where $P_\theta(Y_i \mid X, Y_U)$ denotes the distribution with all positions in $M$ masked. Computing these quantities requires $|S|+2$ forward passes per sample: one with all tokens masked, one for each single-token reveal in $S$, and one for the joint reveal. The sub-additivity assumption holds when $\mathrm{RHS}_i \geq \mathrm{LHS}_i$; we refer to the difference $\mathrm{RHS}_i - \mathrm{LHS}_i$ as the \emph{slack}.

\paragraph{Results.}
Figure~\ref{fig:synergy_cdf} shows the empirical CDF of the slack stratified by subset size $|S|$. Across all subset sizes, the distribution is concentrated to the right of zero, with violation rates below 6\%. Notably, the mean slack increases with $|S|$, indicating that the bound becomes increasingly slack for larger subsets. These results confirm that violations are rare, supporting Assumption~\ref{ass:subadditivity}'s use in Section~\ref{sec:method}.

\section{Conclusion}

We presented DEMASK, a dependency-guided approach to parallel decoding in discrete diffusion language models. Rather than treating token selection as independent decisions, our method learns to predict which positions exhibit weak dependence and can therefore be unmasked simultaneously with minimal distributional error.

On Dream-7B, DEMASK delivers speedups of up to 2.2$\times$ without sacrificing output quality; the accuracy even improves on several benchmarks. Our approach achieves performance gains across reasoning, math, and code generation tasks, while requiring only a small auxiliary module (${\sim}26$M parameters).

\paragraph{Future Directions.}
Several directions merit further investigation. First, the predictor architecture is agnostic to the backbone, requiring only hidden states at masked positions, which should enable extension to other dLLM architectures such as LLaDA and LLaDA 2.0. Second, improved predictor architectures---such as multi-head variants or deeper networks---may capture finer-grained dependencies, though with an accuracy-cost trade-off. Third, combining dependency-guided selection with remasking strategies could leverage dependency structure to decide which tokens to remask. Fourth, tighter theoretical bounds might be achievable by relaxing the sub-additivity assumption. Finally, adaptive thresholds that vary $\tau$ and $\gamma$ based on generation progress or task characteristics could further improve the accuracy-efficiency trade-off.

\section{Limitations}

\paragraph{Sub-additivity assumption.}
Our theoretical guarantee (Theorem~\ref{thm:correctness}) relies on Assumption~\ref{ass:subadditivity}. As shown in Section~\ref{sec:synergy_validation}, this assumption holds in the vast majority of cases. When violations occur, the TV bound may not hold exactly; however, we observe no corresponding degradation in downstream task performance, suggesting the method is robust to such violations.

\paragraph{Theory-practice gap in dependency estimation.}
Theorem~\ref{thm:correctness} assumes access to the true dependency matrix $\mathbf{D}$, whereas our implementation uses a learned approximation $\hat{\mathbf{D}}$. Prediction errors propagate to the TV bound, though we have not formally characterized this relationship. Empirically, the predictor appears sufficiently accurate for the downstream tasks we evaluated.

\paragraph{Single backbone architecture.}
We evaluate DEMASK exclusively on Dream-7B. Applying the method to other dLLM architectures requires retraining the dependency predictor on data generated from those backbones, which we have not yet performed.

\section*{Impact Statement}

This paper presents work whose goal is to advance the field of machine learning. There are many potential societal consequences of our work, none of which we feel must be specifically highlighted here.
\section*{Acknowledgments}
L. R. and Y. R. were supported by the European Union (ERC, SafetyBounds, 101163414). Views and opinions
expressed are however those of the authors only and do not necessarily reflect those of the European Union or
the European Research Council Executive Agency. Neither the European Union nor the granting authority can be
held responsible for them. This research was also partially supported by the Israel Science Foundation (ISF grant
729/21). Y. R. acknowledges additional support from the Career Advancement Fellowship at the Technion. The contribution of the first author is part of a
PhD thesis research conducted at the Technion.

\bibliography{example_paper}
\bibliographystyle{icml2026}

\newpage
\appendix
\onecolumn

\section{Proof of \Cref{thm:correctness}}
\label{sec:proof_thm_correctness}

\begin{proof}
We use a telescoping sum argument. Let $k = |S|$ denote the number of selected positions. Define a sequence of intermediate distributions $Q^{(t)}$ over the random variables $Y_S$ for $t=1, \dots, k$. Let $Q^{(t)}$ be the distribution that preserves the model's joint dependency for the first $t$ variables and assumes independence for the remainder. For a specific realization $y_S = (y_{s_1}, \dots, y_{s_k})$, this is defined as:
$$ Q^{(t)}(y_S \mid X, Y_U) = P_\theta(y_{s_1}, \dots, y_{s_t} \mid X, Y_U) \prod_{j=t+1}^k P_\theta(y_{s_j} \mid X, Y_U). $$
Observe that $Q^{(k)}$ corresponds to the model's joint distribution $P_\theta(Y_S \mid X, Y_U)$. Furthermore, $Q^{(1)}$ corresponds to the factorized approximation $Q_\theta(Y_S \mid X, Y_U)$ defined in \eqref{eq:Q}, since the first token $s_1$ has no history within $S$. By the Triangle Inequality:
\begin{equation}\label{eq:triangle}
\textup{TV}(P_\theta(Y_S \mid X, Y_U), Q_\theta(Y_S \mid X, Y_U)) \le \sum_{t=2}^k \textup{TV}(Q^{(t)}, Q^{(t-1)}).
\end{equation}

We now analyze the term $\textup{TV}(Q^{(t)}, Q^{(t-1)})$ for $t \ge 2$. The distributions $Q^{(t)}$ and $Q^{(t-1)}$ differ only in the generation of the variable $y_{s_t}$: in $Q^{(t)}$, $y_{s_t}$ is drawn from the conditional distribution $P_\theta(y_{s_t} \mid X, Y_U, y_{s_{<t}})$, whereas in $Q^{(t-1)}$, it is drawn from the marginal distribution $P_\theta(y_{s_t} \mid X, Y_U)$ independent of the history $y_{s_{<t}} = (y_{s_1}, \dots, y_{s_{t-1}})$.

To facilitate the calculation, we partition the full realization vector $y_S$ into three parts: the history $y_{s_{<t}}$, the current token $y_{s_t}$, and the future tokens $y_{s_{>t}} = (y_{s_{t+1}}, \ldots, y_{s_k})$. We express the TV distance by summing over all possible realizations $y_S \in \mathcal{V}^k$:
\begin{align}
    \textup{TV}(Q^{(t)}, Q^{(t-1)})
    &= \frac{1}{2} \sum_{y_S \in \mathcal{V}^k} \left| Q^{(t)}(y_S \mid X, Y_U) - Q^{(t-1)}(y_S \mid X, Y_U) \right| \nonumber \\
    &= \frac{1}{2} \sum_{y_S \in \mathcal{V}^k} \left| \textcolor{blue}{P_\theta(y_{s_{<t}} \mid X, Y_U)} P_\theta(y_{s_t} \mid X, Y_U, y_{s_{<t}}) \textcolor{violet}{\prod_{j=t+1}^k P_\theta(y_{s_j} \mid X, Y_U)} \right. \nonumber \\
    &\quad \quad \quad \quad \quad \left. - \textcolor{blue}{P_\theta(y_{s_{<t}} \mid X, Y_U)} P_\theta(y_{s_t} \mid X, Y_U) \textcolor{violet}{\prod_{j=t+1}^k P_\theta(y_{s_j} \mid X, Y_U)} \right| \label{eq:tv_expansion}
\end{align}
We observe that the probability of the history \textcolor{blue}{$P_\theta(y_{s_{<t}} \mid X, Y_U)$} and the probability of the future tokens \textcolor{violet}{$\prod_{j=t+1}^k P_\theta(y_{s_j} \mid X, Y_U)$} are common factors. We factor them out, organizing the terms to match the logical order of dependency:
\begin{align}
    \eqref{eq:tv_expansion} &= \frac{1}{2} \sum_{y_S \in \mathcal{V}^k} \textcolor{blue}{P_\theta(y_{s_{<t}} \mid X, Y_U)} \cdot \left| P_\theta(y_{s_t} \mid X, Y_U, y_{s_{<t}}) - P_\theta(y_{s_t} \mid X, Y_U) \right| \cdot \textcolor{violet}{\left( \prod_{j=t+1}^k P_\theta(y_{s_j} \mid X, Y_U) \right)}. \label{eq:tv_factorized}
\end{align}
We now split the summation over $y_S$ into three nested sums corresponding to the history $y_{s_{<t}}$, the current token $y_{s_t}$, and the future tokens $y_{s_{>t}}$:
$$ \sum_{y_S \in \mathcal{V}^k} (\dots) = \sum_{y_{s_{<t}} \in \mathcal{V}^{t-1}} \sum_{y_{s_t} \in \mathcal{V}} \sum_{y_{s_{>t}} \in \mathcal{V}^{k-t}} (\dots). $$
Substituting this into \eqref{eq:tv_factorized}, the terms depending on $y_{s_{>t}}$ (the purple block) are clearly separated at the end. Since the vocabulary $\mathcal{V}$ is finite, we can rigorously marginalize out the future variables. Specifically, because the future variables are sampled independently, the summation over $y_{s_{>t}}$ and the product over $j$ commute. This allows us to push the sum inside the product:
\begin{align}
    \eqref{eq:tv_factorized} &= \frac{1}{2} \sum_{y_{s_{<t}}} \textcolor{blue}{P_\theta(y_{s_{<t}} \mid X, Y_U)} \sum_{y_{s_t}} \left| P_\theta(y_{s_t} \mid X, Y_U, y_{s_{<t}}) - P_\theta(y_{s_t} \mid X, Y_U) \right| \cdot \left( \sum_{y_{s_{>t}} \in \mathcal{V}^{k-t}} \textcolor{violet}{\prod_{j=t+1}^k P_\theta(y_{s_j} \mid X, Y_U)} \right) \label{eq:before_swap} \\
    &= \frac{1}{2} \sum_{y_{s_{<t}}} \textcolor{blue}{P_\theta(y_{s_{<t}} \mid X, Y_U)} \sum_{y_{s_t}} \left| P_\theta(y_{s_t} \mid X, Y_U, y_{s_{<t}}) - P_\theta(y_{s_t} \mid X, Y_U) \right| \cdot \underbrace{\left( \prod_{j=t+1}^k \left[ \sum_{y_{s_j} \in \mathcal{V}} \textcolor{violet}{P_\theta(y_{s_j} \mid X, Y_U)} \right] \right)}_{=1 \times \dots \times 1 = 1} \label{eq:marginalization} \\
    &= \mathbb{E}_{Y_{s_{<t}}} \left[ \textup{TV}\Big(P_\theta(Y_{s_t} \mid X, Y_U), P_\theta(Y_{s_t} \mid X, Y_U, Y_{s_{<t}})\Big) \right]. \label{eq:expected_tv}
\end{align}
The right hand side term in Eq.~\eqref{eq:marginalization} equals 1 because each inner sum is over the complete support $\mathcal{V}$ of a valid probability distribution. Finally, we apply Assumption~\ref{ass:subadditivity} (Sub-Additivity) to \eqref{eq:expected_tv}, which bounds this expected deviation by the sum of pairwise dependencies:
$$ \mathbb{E}_{Y_{s_{<t}}} \left[ \textup{TV}\Big(P_\theta(Y_{s_t} \mid X, Y_U), P_\theta(Y_{s_t} \mid X, Y_U, Y_{s_{<t}})\Big) \right] \le \sum_{j=1}^{t-1} D_{\pi(s_t), \pi(s_j)}. $$
Substituting this back into~\eqref{eq:triangle}:
\begin{equation}
    \textup{TV}(P_\theta(Y_S \mid X, Y_U), Q_\theta(Y_S \mid X, Y_U)) \le \sum_{t=2}^k \underbrace{\left( \sum_{j=1}^{t-1} D_{\pi(s_t), \pi(s_j)} \right)}_{\text{Cost of adding } s_t}.
\end{equation}
The inner term corresponds exactly to the cost added by Algorithm~\ref{alg:selection} when selecting the $t$-th token. The outer sum represents the total accumulated cost. Since the algorithm initializes the cost to zero for the first token (as there are no prior dependencies in $S$) and stops before the total exceeds $\tau$, the bound holds.
\end{proof}

\section{Ablation: Avoid sampling after EOS tokens}
\label{app:eos_ablation}

In discrete diffusion language models, sampling an end-of-sequence (EOS) token at any position signals that generation should terminate. However, the original Dream implementation continues unmasking subsequent positions even after EOS is sampled. While these positions do not affect the final output, they still require diffusion steps.

We apply a natural optimization: once an EOS token is sampled, all subsequent positions are immediately set to EOS. This avoids wasted computation and reflects the intended semantics of EOS. We apply this optimization uniformly to all methods (baselines, KLASS, and DEMASK) for fair comparison.

Table~\ref{tab:eos-ablation} shows results \emph{without} this optimization. Without EOS optimization, KLASS runs much faster (e.g., 6.9 minutes on MBPP vs 19.3 minutes for DEMASK) because its KL-based criterion identifies that positions following an EOS token should also be EOS---a trivially predictable pattern that inflates apparent efficiency gains. This effect is amplified on benchmarks like HumanEval and MBPP, where the evaluation configuration (following the original Dream benchmark) sets max generation length to 768 and 1024 tokens respectively, while the model typically produces much shorter outputs, leaving many positions as EOS tokens that KLASS can trivially unmask in bulk. With the EOS optimization applied uniformly, all methods benefit from early termination, and KLASS's runtime advantage largely disappears (compare with Table~\ref{tab:dream-klass}).

\begin{table}[h]
\centering
\caption{Results \emph{without} EOS optimization. Runtime in minutes. Best accuracy per benchmark in \textbf{bold}.}
\label{tab:eos-ablation}
\small
\setlength{\tabcolsep}{4pt}
\begin{tabular}{@{}l cc cc@{}}
\toprule
& \multicolumn{2}{c}{DEMASK} & \multicolumn{2}{c}{KLASS} \\
\cmidrule(lr){2-3} \cmidrule(l){4-5}
Benchmark & Acc (\%) & Time (min) & Acc (\%) & Time (min) \\
\midrule
MMLU-Pro & \textbf{47.6} & 89.6 & 45.7 & 126.7 \\
GSM8K & \textbf{83.7} & 10.1 & 82.6 & 14.3 \\
HumanEval & \textbf{58.5} & 5.8 & 54.3 & 4.1 \\
MBPP & 56.6 & 19.3 & \textbf{57.0} & 6.9 \\
\bottomrule
\end{tabular}
\end{table}

\section{Hyperparameter Search Grids}
\label{app:hyperparams}

We conduct grid searches on GSM8K to select hyperparameters for both DEMASK and KLASS.

\paragraph{DEMASK.} We search over dependency threshold $\tau \in \{0.5, 0.4, 0.3, 0.2, 0.1, 0.08, 0.06, 0.04, 0.02, 0.01, 0.003, 0.001\}$ and confidence threshold $\gamma \in \{0.9, 0.7, 0.5, 0.3, 0.1\}$, yielding 60 configurations. We select $\tau = 0.04$ and $\gamma = 0.9$.

\paragraph{KLASS.} We search over KL threshold $\in \{0.02, 0.015, 0.01, 0.005, 0.001, 0.0003, 0.0001, 0.00003, 0.00001\}$ and confidence threshold $\in \{0.5, 0.6, 0.7, 0.8, 0.9\}$, yielding 45 configurations. We select KL threshold $= 0.0003$ and confidence threshold $= 0.9$.

\end{document}